\documentclass[11pt]{article}

\usepackage[margin=1in]{geometry}
\usepackage{times}
\usepackage[numbers,compress]{natbib}

\usepackage[utf8]{inputenc}
\usepackage[T1]{fontenc}
\usepackage{hyperref}
\usepackage{url}
\usepackage{booktabs}
\usepackage{amsfonts}
\usepackage{nicefrac}
\usepackage{microtype}
\usepackage{xcolor}
\usepackage{graphicx}
\usepackage{amsmath}
\usepackage{amssymb}
\usepackage{amsthm}

\newtheorem{theorem}{Theorem}
\newtheorem{corollary}[theorem]{Corollary}

\newtheorem{definition}[theorem]{Definition}

\title{Structure of the Circular--Dyadic Convolution Error}

\author{%
  Ben Fauber and Alireza Moradzadeh \\
  NVIDIA \\
  \texttt{\{bfauber,amoradzadeh\}@nvidia.com}
}
\date{}

\begin{document}

\maketitle

\begin{abstract}
Dyadic and circular convolution can both be computed in $O(N\log N)$ time using the Hadamard transform and the FFT-computed discrete Fourier transform (DFT), respectively. The Hadamard transform is preferable for its real-valued sign flips, yet its substitution for the DFT introduces algebraic error. We present three complementary results that characterize this error. First, we identify exact error cancellation: two input and two output positions are universally error-free, and no reordering of the output can eliminate this error. Second, the error operator is nearly full rank, while its null space has only logarithmic dimension. Third, the expected error is governed by a single alignment scalar, with a closed‑form expression obtained by averaging over random filters. In general, the substitution error asymptotically doubles the output energy, except for filters in the universal zero‑error subspace, which incur no error. Collectively, these results show that the substitution error is structured, predictable, and governed by alignment.
\end{abstract}

\section{Introduction}
\label{sec:intro}

The Hadamard transform offers the same $O(N\log N)$ complexity as the fast Fourier transform (FFT) while using only real arithmetic and sign flips \cite{beauchamp1975walsh, cooley1965algorithm}, making it a favorable substitute in random projection settings. For example, the subsampled randomized Hadamard transform \cite{tropp2011improvedanalysissubsampledrandomized, doi:10.1137/100810447} achieves the same dimensionality-reduction guarantees as the fast Johnson-Lindenstrauss transform \cite{ailonchazelle2006fjlt}, and Hadamard-based kernel approximations \cite{cherapanamjeri2022uniformapproximationsrandomizedhadamard} match random Fourier features \cite{NIPS2007_013a006f} in approximation quality. In both cases, interchangeability follows from shared incoherence and concentration properties rather than algebraic structure.

In the convolution setting, algebraic structure is decisive: the discrete Fourier transform (DFT) diagonalizes \emph{circular} convolution via modular indexing, while the Hadamard transform diagonalizes \emph{dyadic} convolution via XOR indexing \cite{Terras_1999}. Substituting one for the other therefore changes the operation being computed. Hadamard layers have already been proposed as drop-in replacements for DFT-based convolution in deep neural networks~\cite{pan2021fast, pan2022block, pan2023htlayer}, yet to our knowledge no prior work has identified the signal conditions under which this error vanishes, characterized the rank and null-space structure of the error operator, or derived a closed-form expression for its expected magnitude.

Three complementary results formalize the effects of this substitution. First (\emph{zero-error conditions}): the mismatch permutation $\rho_n$ (Section~\ref{sec:general-structure}) is the identity at exactly two output positions, and exactly two input positions are fixed by $\rho_n$ for every output position $n$. These are therefore the only locations that are universally error‑free, and no output reordering can extend this agreement to all inputs. Second (\emph{operator rank and the equivalence subspace}): the orbit structure of the group generated by the mismatch permutations determines a null space of only logarithmic dimension, leaving the error operator nearly full rank. Third (\emph{error magnitude for generic signals}): a single alignment scalar governs the expected substitution error, which asymptotically doubles the output energy for generic filters, while filters in the universal zero-error subspace incur no error.

\section{Related Work}

\paragraph{Substitution error and algebraic foundations.}
P\"uschel and Moura~\cite{PueschelMoura2008_ASP_1DSpace, PueschelMoura2008_ASP_Foundation1DTime} show that circular and dyadic convolution arise from fundamentally different signal models and are therefore not algebraically interchangeable. Related structural mismatches have been studied in spectral leakage~\cite{harris1978windows} and mismatched filtering~\cite{turin1960matched}, where errors from incorrect periodicity or kernel assumptions are structured and analytically tractable. This paper applies a similar analytic lens to a different mismatch: the filter is fixed while the convolution group is changed. Although finite‑group Fourier theory~\cite{Terras_1999} establishes that $\mathbb{Z}_N$ and $(\mathbb{Z}_2)^m$ are non‑isomorphic, the rank structure, null space, and expected‑error behavior of the difference operator have not previously been characterized.

\paragraph{Structured transforms in sequence models.}
State-space models \cite{gu2022iclr-efficiently} and long-convolution architectures \cite{poli2023hyenahierarchylargerconvolutional, brixi2026genomemodellingevo2} reduce sequence-mixing from the $O(N^2)$ cost of Attention~\cite{10.5555/3295222.3295349} to $O(N\log N)$ using DFT-based circular convolution, establishing it as a practical sequence mixer at scale and motivating interest in hardware-efficient alternatives such as the Hadamard transform.

\paragraph{Hadamard layers in CNNs.}
Hadamard‑based neural network layers have been proposed as drop‑in replacements for circular‑convolution operations in convolutional neural networks (CNNs) \cite{pan2021fast, pan2022block, pan2023htlayer}, achieving competitive accuracy on image‑classification benchmarks while treating the two transforms as interchangeable.

\section{Preliminaries}

Throughout, we consider discrete real-valued signals of length $N = 2^m$ (required by the Hadamard transform; Section~\ref{sec:hadamard-transform}) for integer $m \geq 2$, where $n = 0, 1, \ldots, N-1$ is the output position index, $k = 0, 1, \ldots, N-1$ is the summation index, $u \in \mathbb{R}^N$ is the filter and $v \in \mathbb{R}^N$ is the input signal. The case $m=1$ ($N=2$) is excluded: $(\mathbb{Z}_2,+) \cong ((\mathbb{Z}_2)^1,\oplus)$, so the two convolutions coincide identically and the substitution error is trivially zero.

\subsection{Discrete Fourier Transform}

Let $\mathcal{F}$ denote the $N$-point DFT, $(\mathcal{F}\{x\})[\xi] = \sum_{j=0}^{N-1} x[j]\,e^{-2\pi \mathrm{i}\xi j/N}$, $\xi = 0,\ldots,N-1$. The DFT convolution theorem~\cite{cooley1965algorithm, lyonsdsp2011} states $\mathcal{F}\{y_{\mathrm{C}}\} = \mathcal{F}\{u\} \odot \mathcal{F}\{v\}$, where $\odot$ denotes element-wise multiplication. Applying $\mathcal{F}^{-1}$ gives $y_{\mathrm{C}} = \mathcal{F}^{-1}\!\bigl\{\mathcal{F}\{u\} \odot \mathcal{F}\{v\}\bigr\}$. Expanding, with indices taken modulo $N$:

\begin{equation}
   \label{eq:circ-conv}
   y_{\mathrm{C}}[n] = \sum_{k=0}^{N-1} u[k]\,v\bigl[(n-k)\bmod N\bigr]
\end{equation}

\subsection{Hadamard Transform}
\label{sec:hadamard-transform}

The Hadamard (Walsh--Hadamard) transform is a generalized Fourier transform~\cite{walsh1923closed, shanks1969ITCmp.100.457S, beauchamp1975walsh}. The $(k,n)$-th entry of its associated $N \times N$ operator $\mathcal{H}_m$ is:
 
 \begin{equation}                                      
 \label{eq:hadamard-entry}
   (\mathcal{H}_{m})_{k,n}={\frac {1}{2^{m/2}}}(-1)^{\sum _{r} k_{r} n_{r}}
\end{equation}

\noindent where the sum runs over bit positions $r = 0, 1, \ldots, m-1$ and $k_r, n_r \in \{0,1\}$ are the $r$-th bits of $k$ and $n$. Since $\mathcal{H}_m$ is involutory ($\mathcal{H}_{m}^2 = I_{2^m}$), $\mathcal{H}_m^{-1} = \mathcal{H}_m$. It can be computed in $O(N\log N)$ operations \cite{beauchamp1975walsh, fino1976unified}, matching FFT complexity, using only sign flips and no complex arithmetic.

\medskip
\noindent\textbf{Remark ($N = 2^m$ requirement).} The restriction to powers of two is intrinsic: the recursive butterfly structure of $\mathcal{H}_m$ requires halving at each stage, and the XOR-based index arithmetic of equation~\eqref{eq:hadamard-entry} is defined over $m$-bit integers. Extensions to non-power-of-two lengths are outside the scope of this work.

\subsection{Hadamard Transform Convolution}
\label{sec:hadamard-conv}

$\mathcal{H}_m$ supports its own convolution theorem \cite{doi:10.1049/el:19730172, beauchamp1975walsh, uvsakova2002walsh}, computing a \emph{dyadic convolution}\footnote{The term \emph{dyadic} follows the Walsh--Hadamard convolution literature~\cite{doi:10.1049/el:19730172, beauchamp1975walsh}, where it denotes XOR‑indexed convolution over $(\mathbb{Z}_2)^m$. This usage is distinct from the wavelet‑theoretic (Haar) meaning of dyadic, which refers to multi-resolution analysis over dyadic intervals~\cite{ahmed1975orthogonal}.} that is not interchangeable with the circular convolution of the DFT under any output reordering (Section~\ref{sec:alg-structures} and Theorem~\ref{thm:zero-error}). The dyadic convolution is defined as follows:

\begin{equation}
      \label{eq:dyadic-conv}
       y_{\mathrm{D}}[n] = \sum_{k=0}^{N-1} u[k]\,v[n \oplus k]
\end{equation}

\noindent where $\oplus$ denotes XOR, replacing the modular indexing $(n-k)\bmod N$ of equation~\eqref{eq:circ-conv}. The XOR indexing arises from the binary structure of equation~\eqref{eq:hadamard-entry}: the sign $(-1)^{\sum_r k_r n_r}$ decomposes independently at each bit position, and XOR is the index arithmetic that preserves this structure. The following convolution identity holds:\footnote{The $\sqrt{N}$ factor is a normalization artifact of the unitary convention $\mathcal{H}_m^2 = I$; the unnormalized matrix $H_m$ (entries $\pm1$, $H_m^{-1} = H_m/N$) gives the equivalent form $y_{\mathrm{D}} = \tfrac{1}{N}H_m\{H_m\{u\} \odot H_m\{v\}\}$, directly analogous to equation~\eqref{eq:circ-conv}.}

\begin{equation}
      \label{eq:hadamard-thm}
      y_{\mathrm{D}} = \sqrt{N}\;\mathcal{H}_m^{-1}\!\left\{\mathcal{H}_m\{u\} \odot \mathcal{H}_m\{v\}\right\}
\end{equation}

\subsection{Difference in Algebraic Structures}
\label{sec:alg-structures}

Although $\mathcal{H}_m$ is equivalent to a multidimensional DFT at the transform level~\cite{shanks1969ITCmp.100.457S, yarlagadda1997hadamard}, this equivalence does not extend to convolution over $\mathbb{Z}_N$. Both operations are group convolutions; the term \emph{circular} throughout this paper refers specifically to convolution over the cyclic group $\mathbb{Z}_N$ (modular indexing), not to group convolution in general. The DFT diagonalizes circular convolution, whereas $\mathcal{H}_m$ diagonalizes dyadic convolution over the binary group $(\mathbb{Z}_2)^m$ via XOR indexing \cite{Terras_1999}. These groups are not isomorphic for $m \geq 2$: $\mathbb{Z}_N$ contains an element of order $2^m$, while every element of $(\mathbb{Z}_2)^m$ has order at most $2$. Consequently, the two convolutions are algebraically distinct. In the framework of P\"uschel and Moura~\cite{PueschelMoura2008_ASP_1DSpace, PueschelMoura2008_ASP_Foundation1DTime}, this distinction reflects that the DFT and $\mathcal{H}_m$ are Fourier transforms for different, non‑isomorphic signal models; the structured error produced by substituting one for the other is the focus of Section~\ref{sec:results} onward.

\section{Results}
\label{sec:results}

Throughout Section~\ref{sec:results}, let $N = 2^m$ and $u, v \in \mathbb{R}^N$; we define the \emph{error signal}:

\begin{equation}
    e[n] \;\triangleq\; y_{\mathrm{C}}[n] \;-\; y_{\mathrm{D}}[n]
\end{equation}

\noindent where $y_{\mathrm{C}}$ and $y_{\mathrm{D}}$ are the circular and dyadic convolutions of equations~\eqref{eq:circ-conv} and~\eqref{eq:dyadic-conv}. Let $\mathbf{C}^u$ and $\mathbf{H}^u$ denote the $N\times N$ matrices with $(\mathbf{C}^u)_{n,k} = u[(n-k)\bmod N]$ and $(\mathbf{H}^u)_{n,k} = u[n\oplus k]$, so that $\mathbf{e} = (\mathbf{C}^u - \mathbf{H}^u)v$. The full error vector is written $\mathbf{e}$ (bold) to distinguish it from the scalar $e[n]$; $\mathbf{b}_j$ denotes the $j$-th standard basis vector. Signal vectors $u$, $v$, $y_{\mathrm{C}}$, $y_{\mathrm{D}}$ are written unbolded throughout.

\subsection{Foundational Zero-Error Conditions}
\label{sec:foundational}

The first result characterizes exact error cancellation: we identify the output and input positions where both convolutions agree for all $u, v \in \mathbb{R}^N$.

\begin{theorem}[Universal zero-error output positions]\label{thm:zero-error}
For all $N = 2^m$ with $m \geq 2$, there are exactly two output positions $n$ at which the error vanishes for all signals $u, v \in \mathbb{R}^N$:
\begin{equation}
    e[N-1] \;=\; 0 \qquad \text{and} \qquad e[N/2 - 1] \;=\; 0
\end{equation}
These are the only $n \in \{0,\ldots,N-1\}$ satisfying $(n-k)\bmod N = n \oplus k$ for all $k$; in particular, no output permutation can reconcile the two convolutions for all inputs.
\end{theorem}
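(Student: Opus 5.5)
The plan is to reduce everything to a pointwise comparison of the two index maps $k \mapsto (n-k)\bmod N$ and $k\mapsto n\oplus k$, and then to characterize bitwise the $n$ for which they coincide. Write
\[
e[n] = \sum_{k=0}^{N-1} u[k]\bigl(v[(n-k)\bmod N] - v[n\oplus k]\bigr).
\]

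\emph{Step 1 (reduction).} If $(n-k)\bmod N = n\oplus k$ for every $k$, then every summand vanishes, so $e[n]=0$ for all $u,v$. Conversely, suppose some $k$ has $(n-k)\bmod N \neq n\oplus k$. Take $u=\mathbf{b}_k$ and $v=\mathbf{b}_{n\oplus k}$. Only the $k$-th summand survives, and it equals $0-1=-1$, so $e[n]\ne 0$. Hence $e[n]$ vanishes for all $u,v$ exactly when the two index maps agree for all $k$.

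\emph{Step 2 (sufficiency).} Let $n\in\{N-1,\,N/2-1\}$. Then bits $0,\dots,m-2$ of $n$ are all ones. Subtracting $k$ bitwise from such an $n$ produces no borrow in positions $0,\dots,m-2$. In those positions the result bit is $1-k_r = 1\oplus k_r$.

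In the top position $m-1$ there are two cases. If $n_{m-1}=1$, there is again no borrow and the result is $1\oplus k_{m-1}$. If $n_{m-1}=0$, the result bit is $(0-k_{m-1})\bmod 2 = k_{m-1} = 0\oplus k_{m-1}$, and any borrow out of bit $m-1$ is discarded by reduction mod $2^m$. In both cases $(n-k)\bmod N = n\oplus k$.

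\emph{Step 3 (necessity).} Suppose $n$ has a zero bit at some position $r\le m-2$, and take the lowest such $r$. Set $k=2^r$. Then $n\oplus k = n+2^r$ and $(n-k)\bmod N \equiv n-2^r$. These two values differ by $2^{r+1}$. Since $1\le r+1\le m-1$, this difference is not $0 \bmod N$, so the maps disagree at this $k$.

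Therefore bits $0,\dots,m-2$ of $n$ must all be $1$, leaving only the free top bit. This gives exactly $n\in\{N/2-1,\,N-1\}$, and these are distinct because $m\ge 2$.

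\emph{Step 4 (no output permutation).} Suppose a permutation $\pi$ satisfied $y_{\mathrm{C}}[\pi(n)]=y_{\mathrm{D}}[n]$ for all $u,v$. Applying the Step 1 argument with the test signals $u=\mathbf{b}_k$ and $v=\mathbf{b}_j$ forces $(\pi(n)-k)\bmod N = n\oplus k$ for all $k$. Setting $k=0$ gives $\pi(n)=n$, so $\pi$ is the identity.

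The identity permutation, however, fails at every $n\notin\{N/2-1,N-1\}$ by Step 3. Such $n$ exist because $N\ge 4$. Hence no output permutation reconciles the two convolutions.

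The only delicate point is the borrow bookkeeping in Steps 2–3, in particular the discarded borrow at the top bit. This is exactly why $N/2-1$ qualifies and why a zero in bit $m-1$ is the only permitted zero.
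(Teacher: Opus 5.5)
Your proof is correct. Its skeleton matches the paper's: direct verification at the two positions, a witness $k$ for uniqueness, and $k=0$ forcing $\pi=\mathrm{id}$ in the permutation claim. The differences lie in the individual steps.

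For sufficiency, you give one borrow argument covering both positions. Bits $0,\dots,m-2$ of $n$ are ones, so no borrow arises there, and at the top bit there is either no borrow or one that is discarded modulo $N$. The paper instead handles $N-1$ by bitwise complementation and $N/2-1$ by a case split on the leading bit of $k$.

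For uniqueness, the paper uses the single witness $k=N-1$. This reduces the condition to the linear equation $n+1=N-1-n$ and disposes of every other $n$ in one line. Your witness $k=2^r$, with $r\le m-2$ a zero bit of $n$, depends on $n$. It is slightly less economical but more structural: it identifies exactly which bit breaks the identity, and it is the same borrow/submask mechanism that underlies Theorem~\ref{thm:matching-set}.

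Your Step~1 is a genuine addition. The paper only shows that the index identity fails for $n\notin\{N/2-1,N-1\}$. It then tacitly treats this as meaning that $e[n]$ fails to vanish for some $(u,v)$, and it likewise just asserts the ``bilinear-form equality'' in its permutation argument. Your test signals $u=\mathbf{b}_k$, $v=\mathbf{b}_{n\oplus k}$ prove that converse explicitly, and they also justify the index condition you use in Step~4.
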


\noindent\textit{Proof.}

\smallskip
\noindent\textit{(i) $e[N-1] = 0$.} Since $N-1=(1\cdots1)_2$, XOR with $N-1$ is bitwise complementation: $(N-1)\oplus k = N-1-k$, which equals $(N-1-k)\bmod N$ since $N-1-k\in\{0,\ldots,N-1\}$; thus every term of $y_{\mathrm{C}}[N-1]$ matches the corresponding term of $y_{\mathrm{D}}[N-1]$.

\smallskip
\noindent\textit{(ii) $e[N/2 - 1] = 0$.} Write $k = c_k\cdot N/2 + k'$ with $c_k\in\{0,1\}$ and $0\leq k'<N/2$. Since $n=(0\,\underbrace{1\cdots 1}_{m-1})_2$, XOR with $n$ inverts the lower $m-1$ bits and leaves the leading bit of $k$ unchanged. \textbf{Case} $c_k=0$: $n\oplus k = N/2-1-k'$ and $(n-k)\bmod N = N/2-1-k'$ (no modular reduction since $0\leq n-k < N/2$). \textbf{Case} $c_k=1$: $n\oplus k = N-1-k'$ and $(n-k)\bmod N = (-1-k')\bmod N = N-1-k'$ (since $-N<-1-k'<0$). Both cases agree.

\smallskip
\noindent\textit{(iii) Uniqueness.} We show no $n \in \{0,\ldots,N-2\}\setminus\{N/2-1\}$ satisfies the condition for all $k$. Use the witness $k = N-1$: since $(n-(N-1))\bmod N = n+1$ for $n < N-1$ (as $n-N+1<0$, adding $N$ gives $n+1$) and $n\oplus(N-1)=N-1-n$ (XOR with all-ones is bitwise complement), the condition requires $n+1 = N-1-n$, i.e.\ $n = N/2-1$. Any $n\in\{0,\ldots,N-2\}\setminus\{N/2-1\}$ therefore fails at $k=N-1$; the remaining case $n=N-1$ is already a zero-error position by part~(i). Together with parts (i) and (ii), the universal zero-error set is exactly $\{N/2-1,N-1\}$.

\smallskip
\noindent\textit{(iv) No-permutation claim.} Suppose some permutation $\phi$ satisfied $y_{\mathrm{D}}[\phi^{-1}(n)] = y_{\mathrm{C}}[n]$ for all $n$, $u$, $v$. The resulting bilinear-form equality requires $\phi^{-1}(n)\oplus k = (n-k)\bmod N$ for all $n$, $k$; setting $k=0$ gives $\phi^{-1}(n)=n$ for all $n$, so $\phi=\mathrm{id}$. But then the condition reduces to $(n-k)\bmod N = n\oplus k$ for all $n$, $k$, which part~(iii) shows fails for all $n\notin\{N/2-1,N-1\}$. $\square$

\medskip
\begin{corollary}[Dual zero-error input positions]\label{cor:dual-input}
The only $k \in \{0,\ldots,N-1\}$ satisfying $(n-k)\bmod N = n \oplus k$ for all $n$ are $k \in \{0,\,N/2\}$. In particular, columns $k=0$ and $k=N/2$ of $\mathbf{C}^u-\mathbf{H}^u$ are identically zero for all $u$.
\end{corollary}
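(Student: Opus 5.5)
The plan mirrors Theorem~\ref{thm:zero-error}, with the roles of $n$ and $k$ swapped: verify the two columns directly, then show uniqueness with a single witness output position.

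\emph{Sufficiency.} For $k=0$, both $(n-0)\bmod N$ and $n\oplus 0$ equal $n$, so there is nothing to prove.

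For $k=N/2$, write $n = c_n\cdot N/2 + n'$ with $c_n\in\{0,1\}$ and $0\le n'<N/2$. XOR with $N/2=(10\cdots0)_2$ flips only the leading bit, so $n\oplus N/2 = (1-c_n)N/2+n'$. On the modular side, $(n-N/2)\bmod N$ equals $n'$ when $c_n=1$, with no reduction needed. When $c_n=0$ it equals $n'-N/2+N = N/2+n'$. Both cases agree with the XOR value. Hence $(\mathbf{C}^u)_{n,k}=u[(n-k)\bmod N]=u[n\oplus k]=(\mathbf{H}^u)_{n,k}$ for every $n$ and these two $k$. Columns $0$ and $N/2$ of $\mathbf{C}^u-\mathbf{H}^u$ therefore vanish identically for all $u$.

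\emph{Uniqueness.} Fix $k\notin\{0,N/2\}$ and use the witness output $n=0$. Then $(0-k)\bmod N = N-k$, valid since $k\ge1$, while $0\oplus k = k$. Equality forces $N-k=k$, i.e.\ $k=N/2$, which is excluded. So every other $k$ fails at $n=0$. This single-witness step is the analogue of part~(iii) of Theorem~\ref{thm:zero-error} and is the only place where care is needed: we must check that no modular wrap-around case has been missed, and $k\ge1$ guarantees $N-k\in\{1,\ldots,N-1\}$.

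\emph{The ``in particular'' clause.} If the index equality fails for some $n$, the entry $u[(n-k)\bmod N]-u[n\oplus k]$ is nonzero for a suitable $u$, for instance a standard basis vector. So the columns that vanish \emph{for all} $u$ are exactly $k=0$ and $k=N/2$, which matches the statement.
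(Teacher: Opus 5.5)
Your proposal is correct and follows the paper's proof essentially step for step. Like the paper, you verify $k=0$ and $k=N/2$ directly by splitting on the leading bit of $n$, and you get uniqueness from the single witness $n=0$, which forces $N-k=k$. Your closing remark, that a standard basis filter $u=\mathbf{b}_a$ makes any mismatched entry nonzero, is a small correct addition the paper does not state: it shows these are exactly the columns that vanish for all $u$.
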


\noindent\textit{Proof.} $k=0$ is immediate (since $(n-0)\bmod N = n = n\oplus 0$ for all $n$). For $k=N/2$: if $n \geq N/2$, then $(n-N/2)\bmod N = n-N/2$ (no wrap-around) and $n\oplus N/2 = n-N/2$ (clears the leading bit); if $n < N/2$, then $(n-N/2)\bmod N = n+N/2$ (wraps) and $n\oplus N/2 = n+N/2$ (sets the leading bit). Both cases agree. Uniqueness: for $k\neq 0$, at $n=0$, $(0-k)\bmod N = N-k$ and $0\oplus k = k$, so $N-k=k$ forces $k=N/2$. Since $k=0$ is already accounted for, these are the only two solutions. $\square$

\subsection{General Structure of the Error at an Arbitrary Position}
\label{sec:general-structure}

Having identified the positions where the error vanishes universally, we now derive a closed-form expression for $e[n]$ at an arbitrary output position $n$.

\noindent Define $\tilde{u}_n[j] \triangleq u[(n-j)\bmod N] - u[n \oplus j]$ for each output position $n$. Substituting $j=(n-k)\bmod N$ into $y_{\mathrm{C}}[n]$ and $j=n\oplus k$ into $y_{\mathrm{D}}[n]$ (both bijections on $\{0,\ldots,N-1\}$) and subtracting gives:
\begin{equation}
    \label{eq:general-closed-form}
    e[n] \;=\; \langle \tilde{u}_n,\, v \rangle \;=\; \sum_{j=0}^{N-1} \tilde{u}_n[j]\, v[j]
\end{equation}

\medskip
\begin{definition}[Mismatch permutation]\label{def:rho}
For $n \in \{0,\ldots,N-1\}$, the \emph{mismatch permutation} $\rho_n$ is:
\begin{equation}
    \label{eq:rho-def}
    \rho_n(j) \;=\; n \oplus \bigl((n-j)\bmod N\bigr)
\end{equation}
Since $\rho_n$ is the composition of $j\mapsto(n-j)\bmod N$ and $a\mapsto n\oplus a$, both bijections, it is a bijection on $\{0,\ldots,N-1\}$.
\end{definition}

\noindent The \emph{matching set} $S_n \triangleq \{j \in \{0,\ldots,N-1\} : (n-j)\bmod N = n \oplus j\}$ is the fixed-point set of $\rho_n$. Since $\tilde{u}_n[j] = 0$ whenever $j \in S_n$ (the circular and XOR indices agree there), the support of $\tilde{u}_n$ is contained in $S_n^{\mathsf{c}}$, with equality for generic $u$. Recall that $x$ is a \emph{submask} of $y$ if every set bit of $x$ is also set in $y$; the matching set has the following explicit form.

\medskip
\begin{theorem}[Matching set structure]\label{thm:matching-set}
For all $N = 2^m$ and $n \in \{0,\ldots,N-1\}$, with $n' \triangleq n \bmod (N/2)$:
\begin{equation}
    S_n \;=\; \bigl\{j \in \{0,\ldots,N-1\} \;:\; (j \bmod N/2) \text{ is a submask of } n'\bigr\}
\end{equation}
and $|S_n| = 2^{\,\mathrm{popcount}(n')\,+\,1}$, where $\mathrm{popcount}(n')$ denotes the number of $1$-bits in $n'$.
\end{theorem}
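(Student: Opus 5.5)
The plan is to rewrite the condition $(n-j)\bmod N = n\oplus j$ as a no-borrow condition for binary subtraction, and then handle the top bit separately. The standard identity is that for $0\le j\le n<2^r$, $n-j = n\oplus j$ holds if and only if $j$ is a submask of $n$. The reason is that $n\oplus j = n-j+2(\bar n\wedge j)$, or more directly that $n-j=n\oplus j$ exactly when no borrow occurs in any bit position. Modulo $N=2^m$, the subtraction $(n-j)\bmod N$ is ordinary $m$-bit binary subtraction with the final borrow out of bit $m-1$ discarded. Bitwise, it equals $n\oplus j\oplus B$, where $B$ is the vector of borrows into each position. So the condition is equivalent to $B\equiv 0$ in bit positions $0,\ldots,m-1$.

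\textbf{Step 1 (borrow characterization).} Write $B_0=0$ and $B_{r+1}=1$ if and only if $n_r < j_r + B_r$, as integers in $\{0,1,2\}$. The bit of $(n-j)\bmod N$ at position $r$ is $n_r\oplus j_r\oplus B_r$. Hence equality with $n\oplus j$ holds if and only if $B_r=0$ for all $r\le m-1$. By induction on $r$, this is equivalent to the statement that no position $r\le m-2$ has $n_r=0$ and $j_r=1$. If $B_r=0$, then $B_{r+1}=1$ exactly when $(n_r,j_r)=(0,1)$, so the first violating position generates a borrow into the next bit. Crucially, the borrow \emph{out of} bit $m-1$ is irrelevant because it is discarded mod $N$. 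Therefore the top bits $n_{m-1}$ and $j_{m-1}$ are unconstrained.

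\textbf{Step 2 (translate to the statement).} The constraint over bits $0,\ldots,m-2$ says exactly that $j\bmod N/2$ is a submask of $n\bmod N/2=n'$, while $j_{m-1}$ is free. This gives the claimed form of $S_n$. For counting, the lower $m-1$ bits of $j$ range over the $2^{\mathrm{popcount}(n')}$ submasks of $n'$, and the top bit contributes a factor of $2$, giving $|S_n|=2^{\mathrm{popcount}(n')+1}$. As a sanity check, this is consistent with Theorem~\ref{thm:zero-error}: $S_n$ is the full set exactly when $n'=N/2-1$, i.e.\ $n\in\{N/2-1,N-1\}$. It is also consistent with Corollary~\ref{cor:dual-input}, since $j\in\{0,N/2\}$ lies in every $S_n$.

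\textbf{Main obstacle.} The delicate point is Step 1's claim that a single violation at position $r\le m-2$ necessarily produces a visible discrepancy. The borrow $B_{r+1}=1$ flips bit $r+1$ of $(n-j)\bmod N$ relative to $n\oplus j$, and since $r+1\le m-1$ this lies within the retained $m$ bits. I will argue this via the \emph{first} violating position, where all earlier borrows are zero. This shows the two values already differ at bit $r+1$, and later bits do not matter. The same argument applied with $r=m-1$ shows why the top bit escapes: its borrow lands at position $m$, which is discarded.
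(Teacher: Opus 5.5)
Your proof is correct, but it is organized differently from the paper's. The paper writes $n = n' + c_n N/2$ and $j = j' + c_j N/2$ and splits into cases on the top bits ($c_n = c_j$ versus $c_n \neq c_j$, each with sub-cases). In each case it checks that agreement forces $j' \le n'$ and that both sides are shifted by the same multiple of $N/2$. The condition then reduces to the non-modular identity $n' - j' = n' \oplus j'$, which the paper states as the standard no-borrow criterion with only a parenthetical justification.

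You instead treat $(n-j)\bmod N$ uniformly as $m$-bit subtraction with the final borrow discarded, writing $(n-j)\bmod N = n \oplus j \oplus B$. You then prove the no-borrow criterion directly by induction, using the first violating position. In your approach the freedom of the top bit is not a separate case: its borrow simply lands in the discarded position $m$.

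Your route avoids the case split and is self-contained. The paper's route makes the wrap-around arithmetic explicit in integer terms without introducing borrow bits.

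The identity you mention in passing, $n\oplus j = n-j+2(\bar n\wedge j)$, already gives a one-line proof by itself. Since $n\oplus j \in \{0,\ldots,N-1\}$, the equality $(n-j)\bmod N = n\oplus j$ holds iff $2(\bar n\wedge j)\equiv 0 \pmod N$. That is, it holds iff bits $0,\ldots,m-2$ of $\bar n\wedge j$ vanish, which is exactly the condition that $j \bmod N/2$ is a submask of $n'$.
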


\medskip
\noindent\textit{Proof.} See Appendix~\ref{app:matching-set-proof}. $\square$

\begin{corollary}[Extremes of the matching-set spectrum]\label{cor:extremes}
$|S_n|$ is maximized at $n \in \{N/2-1,\,N-1\}$, giving $|S_n| = N$ (the universal zero-error output positions of Theorem~\ref{thm:zero-error}), and minimized at $n \in \{0,\,N/2\}$, giving $S_n = \{0,\,N/2\}$ (the universal zero-error input positions of Corollary~\ref{cor:dual-input}).
\end{corollary}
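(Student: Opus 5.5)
The plan is to read the corollary off the closed form in Theorem~\ref{thm:matching-set}, which gives $|S_n| = 2^{\mathrm{popcount}(n')+1}$ with $n' = n \bmod (N/2)$. Since $x\mapsto 2^{x+1}$ is strictly increasing, maximizing or minimizing $|S_n|$ is the same as maximizing or minimizing $\mathrm{popcount}(n')$ over $n' \in \{0,\ldots,N/2-1\}$, an $(m-1)$-bit range. The map $n\mapsto n'$ is two-to-one. Its fibers are $\{n', n'+N/2\}$, because adding $N/2$ only sets the leading bit, which the reduction discards.

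\textbf{Maximum.} $\mathrm{popcount}(n')\le m-1$, with equality exactly when $n' = N/2-1$ (all $m-1$ lower bits set). The fiber of $N/2-1$ is $\{N/2-1,\,N-1\}$. This gives $|S_n| = 2^{m} = N$, so $S_n$ is all of $\{0,\ldots,N-1\}$, and these are the only maximizers. To identify them with the positions of Theorem~\ref{thm:zero-error}, I will note that $S_n=\{0,\ldots,N-1\}$ is exactly the condition $(n-k)\bmod N = n\oplus k$ for all $k$. Theorem~\ref{thm:zero-error} characterizes precisely these $n$, which gives an independent consistency check.

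\textbf{Minimum.} $\mathrm{popcount}(n')\ge 0$, with equality exactly when $n'=0$, i.e.\ $n\in\{0,N/2\}$. Then $|S_n| = 2$. To pin down the set itself, I will apply the explicit description in Theorem~\ref{thm:matching-set}: $j\in S_n$ iff $j \bmod N/2$ is a submask of $0$, i.e.\ $j\bmod N/2 = 0$, i.e.\ $j\in\{0,N/2\}$. I will then observe that this set agrees with Corollary~\ref{cor:dual-input}. By that corollary, $\{0,N/2\}\subseteq S_n$ for every $n$, so these are the smallest possible matching sets, attained exactly when nothing else matches.

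There is no real obstacle. The only points needing care are the exact fiber of the reduction $n\mapsto n'$, so that both extremes are attained at exactly two values of $n$ and not more, and the hypothesis $m\ge 2$. That hypothesis ensures $N/2-1\neq 0$, so the maximizing and minimizing pairs are distinct.
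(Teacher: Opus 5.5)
Your proposal is correct and takes the same approach as the paper, which substitutes $n'=N/2-1$ and $n'=0$ into the closed form $|S_n| = 2^{\mathrm{popcount}(n')+1}$ of Theorem~\ref{thm:matching-set}. You go further than the paper in three places: you identify the fibers $\{n', n'+N/2\}$ of $n\mapsto n'$ to show the extremizers are exactly those two pairs of $n$, you use the submask-of-$0$ condition to establish the set equality $S_n=\{0,N/2\}$ rather than only $|S_n|=2$, and you note that $m\ge 2$ keeps the maximizing and minimizing pairs distinct; the paper leaves all of this implicit.
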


\noindent\textit{Proof.} Substitute $n' = N/2-1$ ($\mathrm{popcount}(n') = m-1$, giving $|S_n| = 2^m = N$) and $n' = 0$ ($\mathrm{popcount}(n') = 0$, giving $|S_n| = 2$) into Theorem~\ref{thm:matching-set}. $\square$

\subsection{Null-Space Structure of the Error Operator}
\label{sec:null-space}

The second result characterizes the rank and null-space structure of the error operator: the rank is generically $N-m-1$ (nearly full), and the null space has dimension $m+1$ (logarithmic in $N$). Define $\mathrm{lsb}(j) = p$ to mean $2^p \mid j$ and $2^{p+1} \nmid j$ (the position of the least significant $1$-bit of $j$). As shown in Appendix~\ref{app:rank-tightness}, the null-space dimension equals the number of orbits of $G = \langle\rho_n : n=0,\ldots,N-1\rangle$ acting on $\{0,\ldots,N-1\}$.

\begin{theorem}[Null-space structure]\label{thm:null-space}
For all $u \in \mathbb{R}^N$ with $N = 2^m$:
\begin{equation}
    \mathrm{rank}\bigl(\mathbf{C}^u - \mathbf{H}^u\bigr) \;\leq\; N - m - 1
\end{equation}
and $\dim\ker(\mathbf{C}^u - \mathbf{H}^u) \geq m+1$. The null space contains the $m+1$ linearly independent vectors
\begin{equation}
    \mathbf{b}_0 \quad\text{and}\quad \mathbf{f}_p \;\triangleq\; \sum_{\substack{j=1 \\ \mathrm{lsb}(j)=p}}^{N-1} \mathbf{b}_j, \quad p = 0, 1, \ldots, m-1
\end{equation}
Here $\mathbf{b}_j \in \mathbb{R}^N$ is the $j$-th standard basis vector. Moreover, for generic $u$ (all $u \in \mathbb{R}^N$ outside a measure-zero algebraic set):
\begin{equation}
    \mathrm{rank}\bigl(\mathbf{C}^u - \mathbf{H}^u\bigr) = N - m - 1
    \qquad\text{and}\qquad
    \ker(\mathbf{C}^u - \mathbf{H}^u) = \mathrm{span}\{\mathbf{b}_0,\,\mathbf{f}_0,\ldots,\mathbf{f}_{m-1}\}
\end{equation}
We call $\mathrm{span}\{\mathbf{b}_0,\mathbf{f}_0,\ldots,\mathbf{f}_{m-1}\}$ the \emph{universal zero-error subspace}.
\end{theorem}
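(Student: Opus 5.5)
The plan rests on the transposed form of the error. Instead of pairing $v$ against the row $\tilde u_n$, I will reindex the two sums by the filter index: put $a=(n-j)\bmod N$ in $y_{\mathrm C}[n]$ and $a=n\oplus j$ in $y_{\mathrm D}[n]$. This gives
\begin{equation*}
e[n] \;=\; \sum_{a=0}^{N-1} u[a]\,\bigl(v[(n-a)\bmod N]-v[n\oplus a]\bigr).
\end{equation*}
So $v$ lies in $\ker(\mathbf C^u-\mathbf H^u)$ for \emph{every} $u$ exactly when $v[(n-a)\bmod N]=v[n\oplus a]$ for all $n,a$. Writing $j=n\oplus a$, this says $v[\tau_n(j)]=v[j]$ with $\tau_n(j)=(n-(n\oplus j))\bmod N=\rho_n^{-1}(j)$. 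In other words, $v$ is constant on the orbits of $G=\langle\rho_n\rangle$, which is the orbit statement the paper attributes to Appendix~\ref{app:rank-tightness}. The task therefore splits into two parts: identify the orbits, and show that for generic $u$ nothing beyond orbit-constant vectors lies in the kernel.

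\textbf{Orbit identification.} First, $\rho_n(0)=0$ for every $n$, so $\{0\}$ is an orbit. Next fix $j\neq0$ with $\mathrm{lsb}(j)=p$. Then $n\oplus j$ agrees with $n$ on bits $0,\ldots,p-1$ and differs at bit $p$, so $n-(n\oplus j)$ equals $2^p$ times an odd integer. Since $p<m$, this survives reduction mod $2^m$. Hence each $\tau_n$, and so each $\rho_n$, preserves the level set $L_p=\{j:\mathrm{lsb}(j)=p\}$. It follows that $\mathbf b_0$ and each $\mathbf f_p$ lie in the kernel for all $u$. These $m+1$ vectors have pairwise disjoint nonempty supports, so they are independent, which gives $\dim\ker\ge m+1$ and $\mathrm{rank}\le N-m-1$.

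For the generic equality I need each $L_p$ to be a single $G$-orbit. I plan to show transitivity by explicit moves:
\begin{itemize}
\item Take $n=2^p$ and $j=2^p$: then $\tau_n(j)=2^p-0=2^p$, which is the trivial case.
\item Take $n=0$: then $\tau_0(j)=(-j)\bmod N$.
\item Take $n=2^{p+1}-1$ (ones up to bit $p$): then $\tau_n$ complements the low bits of $j$ and subtracts, so the $\tau_n$ act like affine maps $j\mapsto \pm j+c$ on odd multiples of $2^p$.
\end{itemize}
Composing such maps should generate all of $L_p$. If the hand computation is messy, I will instead use that Theorem~\ref{thm:matching-set} pins down the fixed points of $\rho_n$, and check transitivity inductively on $m$.

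\textbf{Generic rank (expected main obstacle).} The set of $u$ with $\mathrm{rank}(\mathbf C^u-\mathbf H^u)<N-m-1$ is the common zero set of all $(N-m-1)$-minors, which are polynomials in $u$. So it suffices to exhibit a single $u$ of rank exactly $N-m-1$. Single basis filters $u=\mathbf b_a$ give the operator $P_a-Q_a$, a shift minus an XOR permutation. Its kernel is the set of vectors constant on the cycles of $P_a^{-1}Q_a$, and that kernel is typically too large, so a combination is needed.

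My first attempt will be $u=\mathbf b_1+t\,\mathbf b_{a}$ summed over a few generators, with $t$ transcendental. I will argue that any kernel vector must then be killed separately by each generator's operator, reducing the kernel to vectors constant on the orbits of the group generated by the corresponding $\rho$-maps. These are exactly the orbits found above.

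If that decoupling fails, I will fall back to induction on $m$. Split indices by the top bit, write $\mathbf C^u-\mathbf H^u$ in $2\times2$ block form over halves of length $N/2$, and use the length-$N/2$ result on the diagonal blocks. The rank then increases by $N/2-1$ per level, which matches the target $N-m-1$.
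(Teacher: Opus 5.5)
Everything before the generic statement is sound. Your lsb argument ($n-(n\oplus j)$ is $2^p$ times an odd number when $\mathrm{lsb}(j)=p$) is shorter than the paper's three-case analysis. Your transitivity moves also work once you compose them. On $L_p$ one has $\tau_0(j)=-j$, and since $(2^{p+1}-1)\oplus j=j-1$, also $\tau_{2^{p+1}-1}(j)=2^{p+1}-j$. So $\tau_{2^{p+1}-1}\circ\tau_0$ is translation by $2^{p+1}$ modulo $N$, which is a single cycle on $L_p$.

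\textbf{The gap is the generic rank.} Your first attempt claims that a kernel vector of $(P_1-Q_1)+t(P_a-Q_a)$, with $t$ transcendental, must be killed by each summand separately. That is false in general, because the kernel vector may depend on $t$. For example, the pencil $\bigl(\begin{smallmatrix}1 & t\end{smallmatrix}\bigr)$ has kernel spanned by $(t,-1)^\top$, although $\bigl(\begin{smallmatrix}1 & 0\end{smallmatrix}\bigr)$ and $\bigl(\begin{smallmatrix}0 & 1\end{smallmatrix}\bigr)$ have no common kernel. Your orbit computation only identifies the common kernel of all the $P_a-Q_a$, i.e.\ the vectors annihilated for every $u$. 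The kernel for a single generic $u$ could a priori be larger. Step~1 of Appendix~\ref{app:rank-tightness} makes the same coefficient comparison over $\mathbb{Q}(u_0,\ldots,u_{N-1})$. It is unjustified for the same reason: the numerators $g[j]$ are themselves polynomials in the $u$'s. So you cannot lean on the paper here.

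Your inductive fallback is the right way to close this, but as written it misidentifies the blocks and only asserts the increment $N/2-1$. The diagonal blocks of the top-bit split are not half-length instances. What is true is that both $\mathbf{C}^u$ and $\mathbf{H}^u$ have the form $\bigl(\begin{smallmatrix}X & Y\\ Y & X\end{smallmatrix}\bigr)$. Conjugating by $\tfrac{1}{\sqrt2}\bigl(\begin{smallmatrix}I & I\\ I & -I\end{smallmatrix}\bigr)$ therefore gives
$\mathbf{C}^u-\mathbf{H}^u\cong(\mathbf{C}^s-\mathbf{H}^s)\oplus(\mathbf{N}^d-\mathbf{H}^d)$ at length $N/2$. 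Here $s$ and $d$ are the sum and difference of the two halves of $u$. The matrix $\mathbf{N}^d$ is negacyclic, with entries $d[n-k]$ for $n\ge k$ and $-d[n-k+N/2]$ for $n<k$.

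Induction handles the first summand, with base case $N=2$, where it vanishes. The missing lemma concerns the second summand, and it is easy. For $d=\mathbf{1}$, the matrix $\mathbf{N}^d-\mathbf{H}^d$ is $-2$ times the strictly upper-triangular all-ones matrix, which has rank $N/2-1$. Take $s$ to be an inductive witness and $d=\mathbf{1}$. This produces a $u$ of rank $(N/2-m)+(N/2-1)=N-m-1$. Your Zariski reduction and the dimension count then give the generic rank and the generic kernel. This completes the proof without any coefficient comparison.
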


\noindent\textit{Proof sketch.}

\smallskip
\noindent\textit{Linear independence and kernel membership of $\mathbf{b}_0$.} The lsb-level sets $\{0\}$ and $\{j:\mathrm{lsb}(j)=p\}$ for $p=0,\ldots,m-1$ partition $\{0,\ldots,N-1\}$ and are the supports of $\mathbf{b}_0,\mathbf{f}_0,\ldots,\mathbf{f}_{m-1}$; disjoint supports imply linear independence. Since $(n-0)\bmod N = n\oplus 0$ for all $n$, every row of $\mathbf{C}^u-\mathbf{H}^u$ vanishes on $\mathbf{b}_0$.

\smallskip
\noindent\textit{Kernel membership of $\mathbf{f}_p$.} Write $n=n_0+n_h\cdot 2^p$, $K=2^{m-p}$; row $n$ of $(\mathbf{C}^u-\mathbf{H}^u)\mathbf{f}_p$ equals $\sum_{\substack{d\,\mathrm{odd}\\1\le d<K}}[u[(n-d\cdot 2^p)\bmod N]-u[n\oplus d\cdot 2^p]]$. Both maps $d\mapsto(n_h-d)\bmod K$ and $d\mapsto n_h\oplus d$ send odd $d$ to values of parity opposite to $n_h$ (subtracting or XOR-ing an odd integer flips the low-order bit), are injective as restrictions of full-domain bijections, and hence are equal-image bijections onto the same $K/2$-element set; the $u$-index multisets coincide and the row equals zero. Full details in Appendix~\ref{app:fp-kernel-proof}.

\smallskip
\noindent\textit{Generic rank $N-m-1$.} It suffices to show $G=\langle\rho_n : n=0,\ldots,N-1\rangle$ acts on $\{0,\ldots,N-1\}$ with exactly $m+1$ orbits (Appendix~\ref{app:rank-tightness}): $\mathrm{lsb}$ is $G$-invariant (a bit-arithmetic case analysis on $\mathrm{lsb}(n)$ vs.\ $\mathrm{lsb}(j)$ shows $\rho_n$ preserves the least significant $1$-bit of $j$), each lsb-level forms a single orbit (step-$2$ adjacency on odd integers connects the entire level via a suitable witness choice of $n$), and $\{0\}$ is a fixed orbit since $\rho_n(0)=n\oplus n=0$. By dimension, $\ker(\mathbf{C}^u-\mathbf{H}^u)=\mathrm{span}\{\mathbf{b}_0,\mathbf{f}_0,\ldots,\mathbf{f}_{m-1}\}$ for generic $u$. Full proofs of kernel membership and rank tightness are in Appendices~\ref{app:fp-kernel-proof} and~\ref{app:rank-tightness}.

\medskip
\noindent\textbf{Remark (Non-generic filters).} The rank $N-m-1$ is tight for generic $u$, but can collapse to zero: any $u\in\mathrm{span}\{\mathbf{b}_0,\mathbf{f}_0,\ldots,\mathbf{f}_{m-1}\}$ satisfies $\mathbf{C}^u=\mathbf{H}^u$ and hence $\mathrm{rank}(\mathbf{C}^u-\mathbf{H}^u)=0$ (Appendix~\ref{app:rank-tightness}). Concretely, for $N=4$ the filter $u=[0,1,0,1]^\top$ assigns equal weight to the odd indices $\{1,3\}$ (the unique lsb-$0$ orbit) and zero elsewhere; direct computation confirms $\mathbf{C}^u=\mathbf{H}^u$.

\subsection{Expected Error}
\label{sec:expected-error}

The third result gives a closed-form expression for the expected error magnitude for random filters and signals, governed by a single alignment scalar. Central to our analysis is the \emph{alignment scalar}:
\begin{equation}
    \label{eq:Q-def}
    Q(u) \;\triangleq\; \sum_{n,k} u\bigl[(n-k)\bmod N\bigr]\,u[n\oplus k]
\end{equation}
which measures the index agreement between $\mathbf{C}^u$ and $\mathbf{H}^u$. Since $(n-k)\bmod N$ and $n\oplus k$ are each bijections in $k$ for fixed $n$, both matrices satisfy $\|\mathbf{C}^u\|_F^2 = \|\mathbf{H}^u\|_F^2 = N\|u\|_2^2$; the standard expansion $\|A-B\|_F^2 = \|A\|_F^2 - 2\langle A,B\rangle_F + \|B\|_F^2$ then gives:
\begin{equation}
    \label{eq:frobenius}
    \|\mathbf{C}^u-\mathbf{H}^u\|_F^2 \;=\; 2N\|u\|_2^2 - 2Q(u)
\end{equation}
For fixed $u$ and i.i.d.\ $v$ with zero mean and variance $\sigma_v^2$, expanding:
\begin{equation}
    \label{eq:Av-expand}
    \mathbb{E}_v\bigl[\|(\mathbf{C}^u-\mathbf{H}^u)v\|_2^2\bigr] \;=\; \sum_{n}\sum_{k,\ell} (\mathbf{C}^u-\mathbf{H}^u)_{nk}\,(\mathbf{C}^u-\mathbf{H}^u)_{n\ell}\,\mathbb{E}[v[k]v[\ell]] \;=\; \|\mathbf{C}^u-\mathbf{H}^u\|_F^2\,\sigma_v^2
\end{equation}
where cross-terms $k\neq\ell$ vanish because $v[k]$ are independent with zero mean. Substituting gives:
\begin{equation}
    \label{eq:fixed-filter-error}
    \mathbb{E}_v\bigl[\|\mathbf{e}\|_2^2\bigr] \;=\; \bigl(2N\|u\|_2^2 - 2Q(u)\bigr)\sigma_v^2
\end{equation}

\medskip
\begin{theorem}[Expected squared error, i.i.d.\ inputs]\label{thm:expected-error}
Let $u[0],\ldots,u[N-1]$ be i.i.d.\ with zero mean and variance $\sigma_u^2$, let $v[0],\ldots,v[N-1]$ be i.i.d.\ with zero mean and variance $\sigma_v^2$, and let $u$ and $v$ be independent of each other. Then:
\begin{equation}
    \label{eq:expected-error}
    \mathbb{E}\bigl[\|\mathbf{e}\|_2^2\bigr] \;=\; 2\sigma_u^2\sigma_v^2\bigl(N^2 - 4\cdot 3^{m-1}\bigr)
\end{equation}
\end{theorem}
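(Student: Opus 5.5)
The plan is to average the fixed-filter identity~\eqref{eq:fixed-filter-error} over the random filter $u$. By independence of $u$ and $v$, conditioning on $u$ and using~\eqref{eq:fixed-filter-error} (whose derivation only used the i.i.d.\ zero-mean structure of $v$) gives
\[
\mathbb{E}\bigl[\|\mathbf{e}\|_2^2\bigr] \;=\; \sigma_v^2\Bigl(2N\,\mathbb{E}\|u\|_2^2 \;-\; 2\,\mathbb{E}\,Q(u)\Bigr).
\]
The first term is immediate: $\mathbb{E}\|u\|_2^2 = N\sigma_u^2$. This contributes $2N^2\sigma_u^2\sigma_v^2$. It remains to show $\mathbb{E}\,Q(u) = 4\cdot 3^{m-1}\sigma_u^2$.

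For $Q$, expand~\eqref{eq:Q-def} term by term. Since the $u[i]$ are i.i.d.\ with zero mean, $\mathbb{E}[u[a]u[b]] = \sigma_u^2\,\delta_{ab}$. Hence
\[
\mathbb{E}\,Q(u) = \sigma_u^2\cdot\#\{(n,k) : (n-k)\bmod N = n\oplus k\} = \sigma_u^2\sum_{n=0}^{N-1}|S_n|,
\]
where $S_n$ is the matching set defined before Theorem~\ref{thm:matching-set}; its defining condition is exactly the index coincidence with summation variable $k$.

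Now invoke Theorem~\ref{thm:matching-set}, which gives $|S_n| = 2^{\mathrm{popcount}(n')+1}$ with $n' = n\bmod (N/2)$. As $n$ ranges over $\{0,\ldots,N-1\}$, $n'$ ranges over $\{0,\ldots,N/2-1\}$ exactly twice. Therefore
\[
\sum_n |S_n| = 2\cdot 2\sum_{n'=0}^{2^{m-1}-1} 2^{\mathrm{popcount}(n')}.
\]
The inner sum factorizes over the $m-1$ independent bits as $\prod_{r=0}^{m-2}(1+2) = 3^{m-1}$, the binomial identity $\sum_i\binom{m-1}{i}2^i = 3^{m-1}$. This yields $\sum_n|S_n| = 4\cdot 3^{m-1}$. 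Substituting back gives
\[
\mathbb{E}\bigl[\|\mathbf{e}\|_2^2\bigr] = 2\sigma_u^2\sigma_v^2\bigl(N^2 - 4\cdot 3^{m-1}\bigr),
\]
which is~\eqref{eq:expected-error}.

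The main obstacle is only bookkeeping: checking that the pair-count in $\mathbb{E}\,Q$ is exactly $\sum_n|S_n|$, with no off-diagonal contributions. This holds because a cross term $u[a]u[b]$ with $a\ne b$ has zero mean. The conditioning step is justified by independence of $u$ and $v$ together with finiteness of second moments, so Fubini applies. All the combinatorial content is supplied by Theorem~\ref{thm:matching-set}.
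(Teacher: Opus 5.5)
Your proposal is correct and follows the paper's proof essentially step for step. You condition on $u$ to reduce to the fixed-filter identity, use $\mathbb{E}\|u\|_2^2 = N\sigma_u^2$, reduce $\mathbb{E}\,Q(u)$ to $\sigma_u^2\sum_n |S_n|$ by discarding the zero-mean cross terms, and evaluate $\sum_n |S_n| = 4\cdot 3^{m-1}$ via Theorem~\ref{thm:matching-set}, folding $n \mapsto n \bmod N/2$, and the per-bit factor $1+2=3$.
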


\noindent\textit{Proof.} Using equation~\eqref{eq:frobenius}, taking expectation over $u$: $\mathbb{E}[\|u\|_2^2] = N\sigma_u^2$, so $\mathbb{E}[2N\|u\|_2^2] = 2N^2\sigma_u^2$. For the cross-term, $\mathbb{E}[Q(u)] = M\sigma_u^2$ where $M = \sum_{n,k}\mathbb{1}[(n-k)\bmod N = n\oplus k] = \sum_{n=0}^{N-1}|S_n|$ counts matching index pairs: for a matching pair both terms index the same entry of $u$, giving $\mathbb{E}[u[(n-k)\bmod N]\,u[n\oplus k]] = \sigma_u^2$; for a non-matching pair the indices are distinct, so $\mathbb{E}[u[a]u[b]] = 0$ by independence.

By Theorem~\ref{thm:matching-set}, $|S_n| = 2^{\mathrm{popcount}(n')+1} = 2\cdot 2^{\mathrm{popcount}(n')}$, so:
\begin{align}
    M &= \sum_{n=0}^{N-1}|S_n|
      = 2\sum_{n=0}^{N-1}2^{\mathrm{popcount}(n\bmod N/2)} \notag\\
      &= 4\sum_{\nu=0}^{N/2-1}2^{\mathrm{popcount}(\nu)}
      = 4\cdot 3^{m-1}
\end{align}
where the third equality substitutes $\nu = n\bmod(N/2)$, folding each $\nu \in \{0,\ldots,N/2-1\}$ into its two preimages $n=\nu$ and $n=\nu+N/2$, and the fourth uses $\sum_{\nu=0}^{N/2-1}2^{\mathrm{popcount}(\nu)} = 3^{m-1}$ (the $m-1$ bit positions of $\nu$ each contribute a factor $1+2=3$). Substituting $M = 4\cdot 3^{m-1}$ into $\mathbb{E}[\|\mathbf{C}^u-\mathbf{H}^u\|_F^2] = 2\sigma_u^2(N^2-M)$ gives $\mathbb{E}[\|\mathbf{C}^u-\mathbf{H}^u\|_F^2] = 2\sigma_u^2(N^2 - 4\cdot 3^{m-1})$. Conditioning on $u$ and applying $\mathbb{E}_v[\|(\mathbf{C}^u-\mathbf{H}^u)v\|_2^2] = \|\mathbf{C}^u-\mathbf{H}^u\|_F^2\sigma_v^2$ yields equation~\eqref{eq:expected-error}. $\square$

\medskip
\begin{corollary}[Error-to-output energy ratio]\label{cor:ratio}
Under the i.i.d.\ model of Theorem~\ref{thm:expected-error},
\begin{equation}
    \label{eq:ratio}
    \frac{\mathbb{E}[\|\mathbf{e}\|_2^2]}{\mathbb{E}[\|y_{\mathrm{C}}\|_2^2]} \;=\; 2\!\left(1-\!\left(\tfrac{3}{4}\right)^{\!m-1}\right)
\end{equation}
which approaches $2$ as $N\to\infty$.
\end{corollary}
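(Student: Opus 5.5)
The plan is to compute the denominator $\mathbb{E}[\|y_{\mathrm{C}}\|_2^2]$ directly and divide the numerator from Theorem~\ref{thm:expected-error} by it. The numerator is $2\sigma_u^2\sigma_v^2(N^2-4\cdot 3^{m-1})$ and needs no further work. For the denominator, I would repeat the argument of equation~\eqref{eq:Av-expand} with $\mathbf{C}^u$ in place of $\mathbf{C}^u-\mathbf{H}^u$. Conditioning on $u$, the cross terms $k\neq\ell$ vanish because the entries of $v$ are independent with zero mean, so
\begin{equation*}
\mathbb{E}_v\bigl[\|\mathbf{C}^u v\|_2^2\bigr]=\|\mathbf{C}^u\|_F^2\,\sigma_v^2=N\|u\|_2^2\,\sigma_v^2 .
\end{equation*}
The second equality uses the identity $\|\mathbf{C}^u\|_F^2=N\|u\|_2^2$ noted just before equation~\eqref{eq:frobenius}.

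Next I take the expectation over $u$, which is legitimate because $u$ and $v$ are independent. Since $\mathbb{E}[\|u\|_2^2]=N\sigma_u^2$, this gives
\begin{equation*}
\mathbb{E}\bigl[\|y_{\mathrm{C}}\|_2^2\bigr]=N^2\sigma_u^2\sigma_v^2 .
\end{equation*}
Dividing, the ratio is $2(N^2-4\cdot 3^{m-1})/N^2$. Writing $N^2=4^m$ turns $4\cdot 3^{m-1}/4^m$ into $(3/4)^{m-1}$, which yields equation~\eqref{eq:ratio}.

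Finally, because $0<3/4<1$, we have $(3/4)^{m-1}\to 0$ as $m\to\infty$, equivalently as $N\to\infty$, so the ratio tends to $2$.

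None of these steps is a real obstacle. The only point needing care is the order of conditioning: condition on $u$ first, then average over $u$. This also shows that the same denominator holds if $y_{\mathrm{D}}$ is used in place of $y_{\mathrm{C}}$, since $\|\mathbf{H}^u\|_F^2=N\|u\|_2^2$ as well.
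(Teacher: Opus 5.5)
Your proposal is correct and follows the paper's proof essentially step for step. You obtain the denominator $N^2\sigma_u^2\sigma_v^2$ by applying the expansion of equation~\eqref{eq:Av-expand} to $\mathbf{C}^u$ alone and then averaging over $u$, and you divide the numerator of Theorem~\ref{thm:expected-error} by it, using $N^2 = 4^m$ for the explicit simplification that the paper leaves implicit.
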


\noindent\textit{Proof.} The denominator follows by applying the same expansion as equation~\eqref{eq:Av-expand} to $\mathbf{C}^u$ alone: $\mathbb{E}_v[\|\mathbf{C}^u v\|_2^2] = \|\mathbf{C}^u\|_F^2\sigma_v^2 = N\|u\|_2^2\sigma_v^2$, and averaging over $u$ gives $\mathbb{E}[\|y_{\mathrm{C}}\|_2^2] = N\mathbb{E}[\|u\|_2^2]\sigma_v^2 = N^2\sigma_u^2\sigma_v^2$. Dividing equation~\eqref{eq:expected-error} by this and simplifying yields equation~\eqref{eq:ratio}. $\square$

\medskip
\noindent\textbf{Remark (Error energy exceeds output energy asymptotically).} For large $N$ the substitution error carries twice the energy of the intended output. The mechanism is decorrelation: $\mathbb{E}[\|\mathbf{e}\|_2^2] = 2\,\mathbb{E}[\|y_{\mathrm{C}}\|_2^2] - 2\,\mathbb{E}[\langle y_{\mathrm{C}}, y_{\mathrm{D}}\rangle]$, and the normalized cross-term $(3/4)^{m-1}$ decays to zero as $N\to\infty$ (Theorem~\ref{thm:expected-error}), so the two energies accumulate additively rather than cancelling.

\medskip
\noindent The alignment scalar $Q(u)$ is itself a quadratic form whose $N$-eigenspace is exactly the universal zero-error subspace of Section~\ref{sec:null-space}, tying the rank and alignment results together.

\medskip
\begin{theorem}[Range and eigenspace of the alignment scalar $Q(u)$]\label{thm:lambda-max}
For all $N = 2^m$ and $u \in \mathbb{R}^N$:
\begin{equation}
    -N\|u\|_2^2 \;\leq\; Q(u) \;\leq\; N\,\|u\|_2^2
\end{equation}
The upper bound is achieved exactly on the universal zero-error subspace $\mathrm{span}\{\mathbf{b}_0, \mathbf{f}_0, \ldots, \mathbf{f}_{m-1}\}$ of Theorem~\ref{thm:null-space}.
\end{theorem}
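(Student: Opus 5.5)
The plan is to read $Q(u)$ as a Frobenius inner product and then use Cauchy--Schwarz. By definition,
\[
Q(u)=\sum_{n,k}(\mathbf{C}^u)_{n,k}(\mathbf{H}^u)_{n,k}=\langle \mathbf{C}^u,\mathbf{H}^u\rangle_F .
\]
Cauchy--Schwarz for the Frobenius inner product, together with $\|\mathbf{C}^u\|_F^2=\|\mathbf{H}^u\|_F^2=N\|u\|_2^2$ (established before equation~\eqref{eq:frobenius}), gives $|Q(u)|\le N\|u\|_2^2$, which is both bounds of the theorem.

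Equality in the upper bound holds iff $\mathbf{C}^u=\lambda\mathbf{H}^u$ with $\lambda\ge 0$. The equal norms force $\lambda=1$, unless $u=0$, which is trivially in the subspace. An equivalent route is equation~\eqref{eq:frobenius}: $Q(u)=N\|u\|_2^2$ iff $\|\mathbf{C}^u-\mathbf{H}^u\|_F^2=0$, i.e.\ iff $\mathbf{C}^u=\mathbf{H}^u$. So it remains to characterize $\{u:\mathbf{C}^u=\mathbf{H}^u\}$ and show it equals $\mathrm{span}\{\mathbf{b}_0,\mathbf{f}_0,\ldots,\mathbf{f}_{m-1}\}$.

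Entrywise, $\mathbf{C}^u=\mathbf{H}^u$ means $u[(n-k)\bmod N]=u[n\oplus k]$ for all $n,k$. Reindexing with $j=(n-k)\bmod N$, this says $u[j]=u[\rho_n(j)]$ for all $n,j$, using Definition~\ref{def:rho} and the fact that $n\oplus k=\rho_n(j)$. Hence the condition is exactly that $u$ is invariant under every $\rho_n$, i.e.\ constant on the orbits of $G=\langle\rho_n\rangle$. By the orbit analysis cited in the proof of Theorem~\ref{thm:null-space} (Appendix~\ref{app:rank-tightness}), the orbits are exactly $\{0\}$ and the lsb-levels $\{j\ge1:\mathrm{lsb}(j)=p\}$ for $p=0,\ldots,m-1$. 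Their indicator vectors are $\mathbf{b}_0,\mathbf{f}_0,\ldots,\mathbf{f}_{m-1}$, so the set of $u$ constant on orbits is precisely the stated span. Both directions of the equality characterization follow. The containment direction alone could also be obtained from the Remark after Theorem~\ref{thm:null-space}, but the orbit argument gives both.

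The one delicate point is the exact orbit count: the lsb-levels must each be a single orbit, not merely unions of orbits. If they split further, the maximizing set would be strictly larger than the span. I rely on the orbit result from Appendix~\ref{app:rank-tightness} for this step rather than reproving it. As an optional sharpening, the lower bound $-N\|u\|_2^2$ is attained iff $\mathbf{C}^u=-\mathbf{H}^u$. However, the theorem only asserts the inequality there, so no attainment analysis is needed.
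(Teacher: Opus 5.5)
Your proposal is correct and follows the paper's proof essentially step for step. It uses Cauchy--Schwarz on $\langle \mathbf{C}^u,\mathbf{H}^u\rangle_F$ for both bounds, equation~\eqref{eq:frobenius} to reduce equality to $\mathbf{C}^u=\mathbf{H}^u$, the reindexing $j=(n-k)\bmod N$ to obtain invariance under every $\rho_n$, and the Orbit Lemma of Appendix~\ref{app:rank-tightness} to identify the orbit-constant vectors with the stated span. The paper phrases the last part as determining the $N$-eigenspace of $T_{\mathrm{sym}}$, but that is only a repackaging of the same chain of equivalences.
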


\noindent\textit{Proof.} Both bounds follow from Cauchy--Schwarz: $|Q(u)| = |\langle\mathbf{C}^u,\mathbf{H}^u\rangle_F| \leq \|\mathbf{C}^u\|_F\|\mathbf{H}^u\|_F = N\|u\|_2^2$. Upper bound and eigenspace characterization: Appendix~\ref{app:quadratic-form}. $\square$

\section{Discussion}

\paragraph{Matching-set geometry as the unifying structure.}
The fixed‑point structure of $\rho_n$ underlies all three classes of results: the matching set determines the zero‑error conditions (Sections~\ref{sec:foundational}–\ref{sec:general-structure}), the way $\rho_n$ partitions indices into orbits determines the null‑space dimension (Section~\ref{sec:null-space}), and the total number of matching pairs determines the closed‑form expression for the expected error (Section~\ref{sec:expected-error}). The two universally zero‑error output positions (Theorem~\ref{thm:zero-error}) and the two universally zero‑error input positions $k \in \{0,\,N/2\}$ (Corollary~\ref{cor:dual-input}) are those where every index matches. Furthermore, no output permutation can reconcile the two convolutions for arbitrary inputs (Theorem~\ref{thm:zero-error}). At all other positions, the error is a dot product between the input and a filter‑difference signal whose support equals the set of non‑matching indices; smaller support leads to smaller error.

\paragraph{Near-full rank and practical implications.}
The substitution error affects nearly all inputs: for generic filters, the error operator has rank $N-m-1$, leaving a zero‑error subspace of only $\log_2 N + 1$ dimensions. At $N=1024$, the zero-error subspace spans just $11$ of $1024$ dimensions ($1.1\%$); at $N=65536$, $17$ of $65536$ ($0.026\%$), shrinking rapidly with signal length. Hadamard layers proposed as drop-in replacements for DFT-based convolution~\cite{pan2021fast, pan2022block, pan2023htlayer} therefore compute a structurally distinct operation on all but a vanishingly small fraction of inputs. Combined with Corollary~\ref{cor:ratio}, this implies that the substitution error carries substantial energy for essentially all inputs. The normalized alignment $Q(u)/\|u\|_2^2$ (Section~\ref{sec:expected-error}) provides a single-number diagnostic: the closer this ratio is to its maximum value of $N$, the smaller the substitution error.

\paragraph{Filter alignment as the governing design variable.}
For random filters and signals, the error‑to‑output energy ratio approaches $2$ as signal length increases (Corollary~\ref{cor:ratio}): absent structural alignment, the substitution error asymptotically doubles the output energy. Filters $u$ lying in the zero-error subspace achieve maximum alignment ($Q(u)/\|u\|_2^2 = N$, the largest value permitted by Theorem~\ref{thm:lambda-max}) and incur no error, while the fraction of maximum alignment for generic i.i.d.\ filters satisfies $\mathbb{E}[Q(u)]/(N\cdot\mathbb{E}[\|u\|_2^2]) = \tfrac{4}{3}(3/4)^m \to 0$ as $N\to\infty$, placing them increasingly far from the zero-error regime. For a learned filter, the normalized alignment $Q(u)/\|u\|_2^2$ is bounded by $[-N,\,N]$ (Theorem~\ref{thm:lambda-max}), and whether training drives filters toward or away from the zero-error subspace is an open empirical question.

\section{Acknowledgements}

The authors thank Zahra Ronaghi and Saee Paliwal for supporting this project. The authors also thank Brian L. Evans of The University of Texas at Austin and Dilip Sarwate of the University of Illinois Urbana-Champaign for discussions that motivated this work. The authors declare no financial interest or conflicts.

\bibliographystyle{plainnat}
\bibliography{wht_conv}

\newpage

\setcounter{theorem}{0}
\appendix

\section{Proof of Theorem~\ref{thm:matching-set} (Matching Set Structure)}
\label{app:matching-set-proof}

\noindent\textit{Proof.} Write $n = n' + c_n \cdot N/2$ and $j = j' + c_j \cdot N/2$ with $n', j' \in \{0,\ldots,N/2-1\}$ and $c_n, c_j \in \{0,1\}$. In both cases ($c_n = c_j$ and $c_n \neq c_j$), the leading-bit contributions cancel: $n \oplus j = (n' \oplus j') + (c_n \oplus c_j)\cdot N/2$, and $(n-j)\bmod N$ evaluates as follows.

\medskip
\noindent\textbf{Case A} ($c_n = c_j$): $n - j = n' - j'$. For $j' \leq n'$: $(n-j)\bmod N = n'-j'$ (non-negative, less than $N/2 \leq N$, no reduction needed), and $n \oplus j = n' \oplus j'$ (leading bits cancel). For $j' > n'$: $(n-j)\bmod N = N + (n'-j') \geq N/2+1$, so the leading bit is $1$; but $n \oplus j = n' \oplus j' < N/2$ has leading bit $0$. No agreement.

\medskip
\noindent\textbf{Case B} ($c_n \neq c_j$): Sub-case $c_n=1, c_j=0$: $n-j = N/2 + n' - j' \in [1,N-1]$, no reduction is needed. Sub-case $c_n=0, c_j=1$: $n-j = n' - N/2 - j' < 0$, adding $N$ gives $N/2 + n' - j'$. In both sub-cases: $(n-j)\bmod N = N/2 + (n'-j')$ and $n \oplus j = (n' \oplus j') + N/2$ (since $c_n \oplus c_j = 1$). These agree iff $n'-j' = n' \oplus j'$.

\medskip
In all cases, the condition reduces to $n' - j' = n' \oplus j'$ with $j' \leq n'$. The identity $a - b = a \oplus b$ holds for non-negative integers iff $b$ is a submask of $a$ (the subtraction has no borrows). Therefore $(n-j)\bmod N = n \oplus j$ if and only if $j' \subseteq n'$.

The number of $j \in \{0,\ldots,N-1\}$ with $j' = j \bmod N/2 \subseteq n'$ is: $2^{\mathrm{popcount}(n')}$ choices for $j'$ (one per submask of $n'$) times $2$ choices for $c_j$, giving $|S_n| = 2^{\mathrm{popcount}(n')+1}$. $\square$

\section{Proof of Kernel Membership of $\mathbf{f}_p$ (Theorem~\ref{thm:null-space})}
\label{app:fp-kernel-proof}

\noindent\textit{Proof.} Row $n$ of $(\mathbf{C}^u-\mathbf{H}^u)\mathbf{f}_p$ equals:
\[
    \sum_{\substack{d \text{ odd} \\ 1 \leq d < 2^{m-p}}} \Bigl[u\bigl[(n - d\cdot 2^p)\bmod N\bigr] - u\bigl[n \oplus d\cdot 2^p\bigr]\Bigr]
\]
Write $n = n_0 + n_h\cdot 2^p$ with $n_0 = n\bmod 2^p$ and $n_h = \lfloor n/2^p\rfloor$. Since $d\cdot 2^p$ is a multiple of $2^p$, the low $p$ bits are unaffected by either operation:
\begin{align*}
    (n - d\cdot 2^p)\bmod N &= n_0 + \bigl((n_h - d)\bmod 2^{m-p}\bigr)\cdot 2^p \\
    n \oplus d\cdot 2^p &= n_0 + (n_h \oplus d)\cdot 2^p
\end{align*}
Let $K = 2^{m-p}$ and $P = \{\ell \in \{0,\ldots,K-1\} : \ell \not\equiv n_h \pmod{2}\}$; note $|P| = K/2$. As $d$ ranges over the $K/2$ odd values in $\{1,\ldots,K-1\}$:
\begin{itemize}
\item \emph{Image in $P$.} Subtracting (resp.\ XOR-ing) an odd $d$ flips the low-order bit of $n_h$, so $(n_h - d)\bmod K$ and $n_h \oplus d$ both have parity opposite to $n_h$; hence both maps send odd $d$ into $P$.
\item \emph{Injectivity on odd $d$.} The map $d\mapsto(n_h-d)\bmod K$ is a bijection on all of $\{0,\ldots,K-1\}$; a bijection on the full domain restricts to an injection on any subset. Similarly, $d\mapsto n_h\oplus d$ is a bijection on $\{0,\ldots,K-1\}$, hence injective on odd $d$.
\item \emph{Bijection onto $P$.} Each restricted map is an injection from a $K/2$-element domain into a $K/2$-element codomain $P$, so both restrictions are bijections onto $P$.
\end{itemize}
Since both maps have the same image $P$, the multisets $\{(n_h-d)\bmod K : d\text{ odd}\}$ and $\{n_h\oplus d : d\text{ odd}\}$ are equal. The corresponding $u$-indices $n_0 + \ell\cdot 2^p$ therefore coincide as multisets, the sums over $u[\cdot]$ are equal, and the difference is zero. $\square$

\section{Proof of Rank Tightness for Generic $u$ (Theorem~\ref{thm:null-space})}
\label{app:rank-tightness}

\noindent\textit{Proof.} We work over $\mathbb{Q}(u_0,\ldots,u_{N-1})$, treating the filter entries as algebraically independent indeterminates; the Conclusion converts this to the measure-theoretic statement.

\medskip
\noindent\textit{Step 1: Generic null space = functions constant on orbits.} Over $\mathbb{Q}(u_0,\ldots,u_{N-1})$, $(\mathbf{C}^u - \mathbf{H}^u)\mathbf{w} = 0$ requires:
\[
\sum_a u[a]\bigl(\mathbf{w}[(n-a)\bmod N] - \mathbf{w}[n\oplus a]\bigr) = 0 \quad\text{for all }n
\]
Let $q \in \mathbb{Q}[u_0,\ldots,u_{N-1}]$ be a common denominator for the entries $\mathbf{w}[j] \in \mathbb{Q}(u_0,\ldots,u_{N-1})$, and write $\mathbf{w}[j] = g[j]/q$ with $g[j] \in \mathbb{Q}[u_0,\ldots,u_{N-1}]$. Multiplying through by $q$ gives $\sum_a u[a](g[(n-a)\bmod N] - g[n\oplus a]) = 0$ as a polynomial identity; since $u_0,\ldots,u_{N-1}$ are algebraically independent, each coefficient vanishes: $g[(n-a)\bmod N] = g[n\oplus a]$ for all $n,a$, hence $\mathbf{w}[(n-a)\bmod N] = \mathbf{w}[n\oplus a]$ for all $n,a$. Substituting $b = (n-a)\bmod N$ (so $n\oplus a = \rho_n(b)$), this reads $\mathbf{w}[b] = \mathbf{w}[\rho_n(b)]$ for all $n,b$. That is, the null space over $\mathbb{Q}(u_0,\ldots,u_{N-1})$ equals the space of vectors constant on orbits of $G = \langle\rho_n : n = 0,\ldots,N-1\rangle$.

\medskip
\noindent\textit{Step 2: Orbit Lemma.} $G$ acts on $\{0,\ldots,N-1\}$ with exactly $m+1$ orbits: $\{0\}$ and $\{j : \mathrm{lsb}(j) = p\}$ for $p = 0,\ldots,m-1$. First, $\rho_n(0) = n \oplus ((n-0)\bmod N) = n \oplus n = 0$ for all $n$, so $\{0\}$ is fixed by every generator and hence forms a $G$-orbit on its own.

\smallskip
\emph{(a) lsb is $G$-invariant.} We show $\mathrm{lsb}(\rho_n(j)) = \mathrm{lsb}(j)$ for all $n,j$. Set $s = \mathrm{lsb}(n)$, $t = \mathrm{lsb}(j)$, write $n = 2^s \bar{n}$, $j = 2^t \bar{j}$ with $\bar{n},\bar{j}$ odd (here $\bar{\cdot}$ denotes the odd part after factoring out the power of $2$; distinct from $n' = n\bmod N/2$ used in the main body and Appendix~\ref{app:matching-set-proof}), and let $\alpha = (n-j)\bmod N$.

\begin{itemize}
\item $s > t$: Then $\alpha = 2^t \cdot (\text{odd})$, so bit $t$ of $\alpha$ is $1$. Since $s > t$, bit $t$ of $n$ is $0$. Thus bit $t$ of $n\oplus \alpha = 0\oplus 1 = 1$, and bits $0,\ldots,t-1$ of $n\oplus \alpha$ are zero. Hence $\mathrm{lsb}(\rho_n(j)) = t$.

\item $s < t$: Write $n - j = 2^s(\bar{n} - 2^{t-s}\bar{j})$. Since $\bar{n}$ is odd and $2^{t-s}\bar{j}$ is even, the factor $\bar{n} - 2^{t-s}\bar{j}$ is odd; set $c = (\bar{n} - 2^{t-s}\bar{j})\bmod 2^{m-s}$ (odd), so $\alpha = 2^s c$. Since both $n$ and $\alpha$ are divisible by $2^s$, we have $n\oplus \alpha = 2^s(\bar{n}\oplus c)$, so $\mathrm{lsb}(\rho_n(j)) = s + \mathrm{lsb}(\bar{n}\oplus c)$. Bits $0,\ldots,t-s-1$ of $c$ equal those of $\bar{n}$: since $2^{t-s}\bar{j}$ is zero in those bit positions and binary subtraction carries propagate toward higher-order bits, zeros in positions $0,\ldots,t-s-1$ of $2^{t-s}\bar{j}$ guarantee those bit positions of $c$ match those of $\bar{n}$. Hence bits $0,\ldots,t-s-1$ of $\bar{n}\oplus c$ are zero. At bit $t-s$: since $\bar{j}$ is odd, bit $t-s$ of $2^{t-s}\bar{j}$ is $1$, and there is no borrow from below, so subtracting flips bit $t-s$ of $\bar{n}$; hence the $(t-s)$-th bit of $\bar{n}\oplus c$ is $1$. Therefore $\mathrm{lsb}(\bar{n}\oplus c) = t-s$ and $\mathrm{lsb}(\rho_n(j)) = s+(t-s) = t$.

\item $s = t$: Then $\alpha = 2^t(\text{even})$, so $\mathrm{lsb}(\alpha) > t$, bit $t$ of $\alpha = 0$, bit $t$ of $n = 1$. Thus, bit $t$ of $n\oplus \alpha = 1$ and bits $0,\ldots,t-1$ are zero. Hence $\mathrm{lsb}(\rho_n(j)) = t$.
\end{itemize}

\smallskip
\emph{Carry-theoretic reading.} The case analysis has a compact restatement: the first borrow in the subtraction $n-j$ occurs at bit $t=\mathrm{lsb}(j)$, since bits $0,\ldots,t-1$ of $j$ are zero and bit $t$ of $j$ is $1$. Bits $0,\ldots,t-1$ of $\alpha=(n-j)\bmod N$ therefore agree with those of $n$, and bit $t$ differs, giving $\mathrm{lsb}(n\oplus \alpha)=t$ directly. Correspondingly, $j\in S_n$ iff the subtraction of their lower $m-1$ bits is borrow-free---equivalently, iff $(j\bmod N/2)$ is a submask of $(n\bmod N/2)$ (Theorem~\ref{thm:matching-set}).

\smallskip
\emph{(b) Transitivity.} We show $G$ acts transitively on $\{j : \mathrm{lsb}(j) = p\}$ for each $p$. Writing $n = 2^p n_h + n_0$ with $n_0 = n\bmod 2^p$, the action of $\rho_n$ on $2^p\tilde{j}$ (odd $\tilde{j}$) is:
\[
\rho_n(2^p \tilde{j}) = 2^p\bigl(n_h \oplus ((n_h - \tilde{j})\bmod K)\bigr) = 2^p\,\rho^{(K)}_{n_h}(\tilde{j})
\]
where $\rho^{(K)}_a(b) = a\oplus((a-b)\bmod K)$ and $K = 2^{m-p}$. Since $d\cdot 2^p$ is divisible by $2^p$, the low $p$ bits $n_0$ are unaffected by both subtraction and XOR, and cancel from $\rho_n(2^p\tilde{j})$. It suffices to show $G^{(K)} = \langle\rho^{(K)}_{n_h}\rangle$ acts transitively on the set $O_K$ of odd elements of $\{0,\ldots,K-1\}$. For $K=2$ (i.e., $p=m-1$), $O_K = \{1\}$ is a single element and transitivity is immediate. For $K \geq 4$, for any odd $\tilde{j} \leq K-3$, choose $n_h = \tilde{j}+1$ (even, so $n_h - \tilde{j} = 1$):
\[
\rho^{(K)}_{\tilde{j}+1}(\tilde{j}) \;=\; (\tilde{j}+1)\oplus 1 \;=\; \tilde{j}+2
\]
since $\tilde{j}+1$ is even (bit 0 is $0$), so XOR with $1$ sets bit 0, giving $\tilde{j}+2$. Thus, for every odd $\tilde{j} \leq K-3$, the elements $\tilde{j}$ and $\tilde{j}+2$ lie in the same $G^{(K)}$-orbit. Since $O_K = \{1,3,5,\ldots,K-1\}$ and the step-$2$ adjacency connects $1\sim 3\sim 5\sim\cdots\sim K-1$, transitivity of the orbit equivalence relation implies all elements of $O_K$ share a single orbit. Hence $G^{(K)}$ acts transitively on $O_K$.

\medskip
\noindent\textit{Conclusion.} Steps~1 and~2 together show that the null space of $\mathbf{C}^u-\mathbf{H}^u$ over $\mathbb{Q}(u_0,\ldots,u_{N-1})$ is exactly the space of functions constant on the $m+1$ orbits of $G$, which has dimension $m+1$; hence $\mathrm{rank}(\mathbf{C}^u-\mathbf{H}^u) = N-(m+1) = N-m-1$ over $\mathbb{Q}(u_0,\ldots,u_{N-1})$. By definition of rank over a field, some $(N-m-1)\times(N-m-1)$ submatrix has nonzero determinant in $\mathbb{Q}(u_0,\ldots,u_{N-1})$; since every entry of $\mathbf{C}^u-\mathbf{H}^u$ is linear in $u_0,\ldots,u_{N-1}$, this determinant is a nonzero polynomial $P\in\mathbb{Z}[u_0,\ldots,u_{N-1}]$ (not merely a nonzero rational function). Since the entries of $\mathbf{C}^u-\mathbf{H}^u$ are linear in $u$, every minor is a polynomial, so $\{u:\mathrm{rank}\geq N-m-1\}$ is Zariski open. A nonzero polynomial over $\mathbb{R}$ cannot vanish on all of $\mathbb{R}^N$, so $P$ takes a nonzero value at some $u\in\mathbb{R}^N$ and the Zariski-open set is nonempty. Its complement is therefore a proper Zariski-closed set in $\mathbb{R}^N$, which has Lebesgue measure zero, so $\mathrm{rank}(\mathbf{C}^u-\mathbf{H}^u) = N-m-1$ for generic $u\in\mathbb{R}^N$. Finally, by Theorem~\ref{thm:null-space}, $\mathrm{span}\{\mathbf{b}_0, \mathbf{f}_0, \ldots, \mathbf{f}_{m-1}\} \subseteq \ker(\mathbf{C}^u - \mathbf{H}^u)$ for every $u$; for generic $u$, $\dim\ker = m+1 = \dim\,\mathrm{span}\{\mathbf{b}_0, \mathbf{f}_0, \ldots, \mathbf{f}_{m-1}\}$, so the inclusion is an equality. $\square$

\medskip
\noindent\textbf{Remark (Numerical illustration; not part of the proof).} The computations below verify the rank result of Theorem~\ref{thm:null-space} for the witness filter $u^* = [0,1,\ldots,N-1]^\top$, for which $(\mathbf{C}^{u^*}-\mathbf{H}^{u^*})_{n,k} = (n-k)\bmod N - (n\oplus k)$; they are provided for illustrative purposes only.

\smallskip
\noindent\textit{Case $N=4$, $m=2$.} The matrix $\mathbf{C}^{u^*}-\mathbf{H}^{u^*}$ is:
\[
\begin{pmatrix}
 0 &  2 & 0 & -2 \\
 0 &  0 & 0 &  0 \\
 0 & -2 & 0 &  2 \\
 0 &  0 & 0 &  0
\end{pmatrix}
\]
Rows $n=1$ and $n=3$ are zero (the universal zero-error output positions $N/2-1$ and $N-1$); row~$0$ and row~$2$ sum to zero. Hence $\mathrm{rank}=1=N-m-1$, and the null space is $\mathrm{span}\{\mathbf{b}_0,\mathbf{f}_0,\mathbf{f}_1\}$ where $\mathbf{b}_0=[1,0,0,0]^\top$, $\mathbf{f}_0=[0,1,0,1]^\top$ (odd indices), $\mathbf{f}_1=[0,0,1,0]^\top$ (index~$2$).

\smallskip
\noindent\textit{Case $N=8$, $m=3$.} The eight rows of $\mathbf{C}^{u^*}-\mathbf{H}^{u^*}$ are:
\begin{align*}
r_0 &= [\phantom{-}0,\phantom{-}6,\phantom{-}4,\phantom{-}2,\phantom{-}0,-2,-4,-6],\\
r_1 &= [\phantom{-}0,\phantom{-}0,\phantom{-}4,\phantom{-}4,\phantom{-}0,\phantom{-}0,-4,-4],\\
r_2 &= [\phantom{-}0,-2,\phantom{-}0,\phantom{-}6,\phantom{-}0,-2,\phantom{-}0,-2],\\
r_3 &= [\phantom{-}0,\phantom{-}0,\phantom{-}0,\phantom{-}0,\phantom{-}0,\phantom{-}0,\phantom{-}0,\phantom{-}0],\\
r_4 &= [\phantom{-}0,-2,-4,-6,\phantom{-}0,\phantom{-}6,\phantom{-}4,\phantom{-}2],\\
r_5 &= [\phantom{-}0,\phantom{-}0,-4,-4,\phantom{-}0,\phantom{-}0,\phantom{-}4,\phantom{-}4],\\
r_6 &= [\phantom{-}0,-2,\phantom{-}0,-2,\phantom{-}0,-2,\phantom{-}0,\phantom{-}6],\\
r_7 &= [\phantom{-}0,\phantom{-}0,\phantom{-}0,\phantom{-}0,\phantom{-}0,\phantom{-}0,\phantom{-}0,\phantom{-}0].
\end{align*}
Rows $r_3$ and $r_7$ are zero (the universal zero-error positions $n=N/2-1$ and $n=N-1$). The six nonzero rows satisfy exactly two linear dependencies: $r_1+r_5=0$ and $r_0+r_2+r_4+r_6=0$. Hence $\mathrm{rank}=6-2=4=N-m-1$, and the null space is $\mathrm{span}\{\mathbf{b}_0,\mathbf{f}_0,\mathbf{f}_1,\mathbf{f}_2\}$ where $\mathbf{b}_0=\mathbf{e}_0$, $\mathbf{f}_0=\mathbf{e}_1+\mathbf{e}_3+\mathbf{e}_5+\mathbf{e}_7$ (odd indices), $\mathbf{f}_1=\mathbf{e}_2+\mathbf{e}_6$ (lsb-$1$ indices), $\mathbf{f}_2=\mathbf{e}_4$ (lsb-$2$ index).

\section{Quadratic Form Analysis: Proof of Theorem~\ref{thm:lambda-max}}
\label{app:quadratic-form}

Define the matrix $T \in \mathbb{Z}_{\geq 0}^{N \times N}$ by
\begin{equation}\label{eq:T-def}
    T[a,b] \;=\; \bigl|\bigl\{n \in \{0,\ldots,N-1\} : \rho_n(a) = b\bigr\}\bigr|
    \;=\; \bigl|\bigl\{n : n \oplus \bigl((n-a)\bmod N\bigr) = b\bigr\}\bigr|
\end{equation}
Then $Q(u) = u^\top T u = u^\top T_{\mathrm{sym}} u$ for all $u \in \mathbb{R}^N$, where $T_{\mathrm{sym}} = (T + T^\top)/2$ is symmetric.

\medskip
\noindent\textbf{Derivation (Quadratic-form representation).}
Starting from equation~\eqref{eq:Q-def}, fix $n$ and substitute $a = (n-k)\bmod N$ (a bijection in $k$). Then $k = (n-a)\bmod N$ and $n\oplus k = n\oplus((n-a)\bmod N) = \rho_n(a)$ (equation~\eqref{eq:rho-def}). Therefore:
\[
    Q(u) \;=\; \sum_n \sum_a u[a]\,u[\rho_n(a)]
    \;=\; \sum_{a,b} u[a]\,u[b]\,\bigl|\{n : \rho_n(a) = b\}\bigr|
    \;=\; u^\top T u
\]
with $T[a,b]$ as defined in equation~\eqref{eq:T-def}. $\square$

\medskip
\noindent\textbf{Properties of $T$ (used in Theorem~\ref{thm:lambda-max}).}

\textit{Row sums $= N$:} Since $\rho_n(a)$ is well-defined, for each $n$ it takes exactly one value $b$, so $\sum_b T[a,b] = \sum_b |\{n : \rho_n(a) = b\}| = N$.

\textit{Diagonal $T[a,a] = Q(\mathbf{b}_a)$:} $T[a,a] = |\{n : \rho_n(a) = a\}| = |\{n : a \in S_n\}| = Q(\mathbf{b}_a)$. By Theorem~\ref{thm:matching-set}, $a \in S_n$ iff $a \bmod N/2$ is a submask of $n \bmod N/2$, giving $T[a,a] = 2^{m-\mathrm{popcount}(a\bmod N/2)}$; hence $Q(\mathbf{b}_a)$ is available in $O(1)$ via popcount without materializing $T$.

\textit{Precomputation cost:} $T$ can be precomputed in $O(N^2)$ time and space by iterating over all $(n,a)$ pairs; thereafter $Q(u) = u^\top Tu$ costs $O(N^2)$ per filter.

\textit{The symmetry of $Q(u)$:} Since $u^\top A u = 0$ for any antisymmetric matrix $A$, we have $Q(u) = u^\top T_{\mathrm{sym}} u$ for all $u$, so all eigenvalue analysis uses $T_{\mathrm{sym}} = (T+T^\top)/2$. Column sums of $T$ also equal $N$ (for fixed $b$ and each $n$, $\rho_n$ is a bijection so exactly one $a$ maps to $b$; summing gives $\sum_a T[a,b] = N$), so $T^\top \mathbf{1} = N\mathbf{1}$ and $T_{\mathrm{sym}}\mathbf{1} = N\mathbf{1}$.

\medskip
\noindent\textbf{Proof of Theorem~\ref{thm:lambda-max}.}
The upper bound $Q(u) \leq N\|u\|_2^2$ follows from Cauchy--Schwarz (proved in the main body); equivalently, $\lambda_{\max}(T_{\mathrm{sym}}) \leq N$. It remains to show the bound is achieved exactly on $\mathrm{span}\{\mathbf{b}_0, \mathbf{f}_0, \ldots, \mathbf{f}_{m-1}\}$. Since $(\mathbf{C}^{\mathbf{b}_0})_{n,k} = (\mathbf{b}_0)[(n-k)\bmod N] = \mathbf{1}[n=k]$ and $(\mathbf{H}^{\mathbf{b}_0})_{n,k} = (\mathbf{b}_0)[n\oplus k] = \mathbf{1}[n=k]$, both matrices equal $I_N$ by definition, so $\|\mathbf{C}^{\mathbf{b}_0}-\mathbf{H}^{\mathbf{b}_0}\|_F^2 = 0$; equation~\eqref{eq:frobenius} then gives $Q(\mathbf{b}_0) = N\|\mathbf{b}_0\|_2^2$, so $T_{\mathrm{sym}}\mathbf{b}_0 = N\mathbf{b}_0$ and $\lambda_{\max}(T_{\mathrm{sym}}) = N$. For each null vector $\mathbf{f}_p$: since lsb-invariance (Appendix~\ref{app:rank-tightness}) gives $\mathrm{lsb}(\rho_n(j)) = \mathrm{lsb}(j)$ for all $n,j$, and $\rho_n((n-k)\bmod N) = n\oplus k$, we have $\mathrm{lsb}((n-k)\bmod N) = \mathrm{lsb}(n\oplus k)$ for all $n,k$; hence $(\mathbf{C}^{\mathbf{f}_p})_{n,k} = \mathbf{f}_p[(n-k)\bmod N] = \mathbf{f}_p[n\oplus k] = (\mathbf{H}^{\mathbf{f}_p})_{n,k}$, so $\mathbf{C}^{\mathbf{f}_p} = \mathbf{H}^{\mathbf{f}_p}$ and $\|\mathbf{C}^{\mathbf{f}_p}-\mathbf{H}^{\mathbf{f}_p}\|_F^2 = 0$; equation~\eqref{eq:frobenius} then gives $Q(\mathbf{f}_p) = N\|\mathbf{f}_p\|_2^2$, so $\mathbf{f}_p$ lies in the $N$-eigenspace of $T_{\mathrm{sym}}$.

\medskip
The $N$-eigenspace of $T_{\mathrm{sym}}$ equals $\mathrm{span}\{\mathbf{b}_0, \mathbf{f}_0, \ldots, \mathbf{f}_{m-1}\}$ with dimension exactly $m+1$: $T_{\mathrm{sym}} u = Nu$ iff $Q(u) = N\|u\|_2^2$ iff $\|\mathbf{C}^u - \mathbf{H}^u\|_F^2 = 0$ (equation~\eqref{eq:frobenius}) iff $u[(n-k)\bmod N] = u[n\oplus k]$ for all $n,k$---i.e., iff $u$ is constant on orbits of $G = \langle\rho_n : n=0,\ldots,N-1\rangle$. The Orbit Lemma (Appendix~\ref{app:rank-tightness}) establishes that $G$ has exactly $m+1$ orbits ($\{0\}$ and $\{j:\mathrm{lsb}(j)=p\}$ for $p=0,\ldots,m-1$), so the space of functions constant on $G$-orbits has dimension $m+1$. Since each $\mathbf{b}_0, \mathbf{f}_0, \ldots, \mathbf{f}_{m-1}$ is constant on a single $G$-orbit and zero elsewhere, every element of $\mathrm{span}\{\mathbf{b}_0, \mathbf{f}_0, \ldots, \mathbf{f}_{m-1}\}$ is constant on $G$-orbits; as this span is $(m+1)$-dimensional, it equals the full $N$-eigenspace. $\square$

\end{document}